\newtheorem{proposition}{Proposition}
\title{Incorporating Exponential Smoothing into MLP:\\ A Simple but Effective Sequence Model}
\author{Jiqun Chu \\
	Peking University  \\
	\texttt{chujiqun@pku.edu.cn} \\\And
	Zuoquan Lin \footnote{corresponding author}\\
	Peking University  \\
	\texttt{linzuoquan@pku.edu.cn} }
\begin{document}
\maketitle

\begin{abstract}	
Modeling long-range dependencies in sequential data is a crucial step in sequence learning.  A recently developed model, the Structured State Space (S4),  demonstrated significant effectiveness in modeling long-range sequences. However, It is unclear whether the success of S4 can be attributed to its intricate parameterization and HiPPO initialization or simply due to State Space Models (SSMs). To further investigate the potential of the deep SSMs, we start with exponential smoothing (ETS), a simple SSM, and propose a stacked architecture by directly incorporating it into an element-wise MLP. We augment simple ETS with additional parameters and complex field to reduce the inductive bias. Despite increasing less than 1\% of parameters of element-wise MLP, our models achieve comparable results to S4 on the LRA benchmark.\footnote{Our codes and scripts are available at \url{https://github.com/PKUAI-LINGroup/ETSMLP}.}
\end{abstract}

\section{Introduction}

Transformer \cite{10.5555/3295222.3295349} and its variants have been the most successful architecture in various domains of deep learning. However, the self-attention layer, which plays a crucial role in contextualizing the input, poses a significant computational and memory burden with a complexity of $O(L^2)$. This limitation hinders the application of the transformers in modeling long sequences, particularly when operating under hardware constraints, which is a common scenario for large language models.  To alleviate this issue, several models have been proposed to reduce the computational and memory requirements of the transformers  \cite{beltagy2020longformer,choromanski2020rethinking,kitaev2020reformer,wang2020linformer,guo2021longt5,kasai2021finetuning,peng2021random,dao2022flashattention,hua2022transformer,10.1145/3530811,10.1145/3586074,zandieh2023kdeformer}. Despite these efforts, all the models are only partial modifications of the attention mechanism and struggle to perform well on long-range sequence benchmarks such as Long Range Arena (LRA) \cite{tay2020long}.

In a recent breakthrough result, \cite{gu2021efficiently}  introduced a novel framework called the "structured state space sequence" (S4) that leveraged the State Space Models (SSMs). S4 builds upon continuous-time SSMs and addresses the computational bottleneck of previous approaches by introducing the Normal Plus Low-Rank (NPLR) decomposition of the state matrices. Additionally, the initialization of state matrices utilizes HiPPO matrices which have been demonstrated to be effective in sequence learning in \cite{gu2020hippo}. Notably, S4 exhibited exceptional performance across various sequential tasks, particularly in the LRA, where it outperformed the existing transformer variants by an impressive 
accuracy.

Despite the impressive performance of S4, its intricate parameterization and strict initialization schemes impede researchers from fully comprehending, implementing, and analyzing the model. Although there have been attempts to simplify the S4 framework by \cite{smith2022simplified,gupta2022diagonal}, these models still required the HiPPO initialization process. Other studies have explored the relationship between SSMs and recurrent units or global convolutions and demonstrated strong performance on various tasks \cite{li2022makes,orvieto2023resurrecting}. These works highlight the potential of SSMs and suggest that simpler yet effective SSM architectures may exist.

In our work, we deviate from the methodology proposed by S4, which begins with the continuous SSM and then simplifies the process. We initiate our approach with a discrete SSM, namely Exponential Smoothing (ETS), and introduce additional parameters to reduce the inductive bias.  This alternative approach offers two notable advantages. Firstly, it circumvents the simplification of the continuous SSMs that need sophisticated mathematical derivations and thus enhances accessibility and comprehensibility. Secondly, it explores the possibility of random initialization departing from HiPPO initialization for continuous SSMs. For the streamlining of the model, our architecture directly integrates a parameterized ETS into an element-wise Multi-Layer Perceptron (MLP). By incorporating less than 3\% of the total parameters after the initial linear layer of the MLP, we successfully transform a channel-only MLP into a sequence learner. 

We conducted experiments on multiple datasets, including the LRA and several Natural Language Understanding (NLU) datasets. Despite its simplicity, surprisingly, our model performs comparably to S4. In all six tasks in the LRA, our results slightly surpass the performance of S4 and DSS by 2.61 points on average and significantly outperform the transformer variants by about 20 points. In addition, we evaluated our model on seven NLU datasets and consistently achieved comparable performance with the transformer encoders. The findings of our work shed light on the potential of SSMs from a unique standpoint, where simply incorporating an ETS into an MLP can achieve a similar effect as the transformer model. A thorough examination of the proposed model was undertaken through an ablation study on the hyperparameters and an evaluation of the initialization method. Additional experiments were conducted to compare our model with the transformer model for efficiency and memory utilization, especially in handling lengthy texts. The results of these experiments provide evidence of the advantages of our model over the transformer model in terms of time and memory complexity.

In summary, our main contributions are as follows:
\begin{itemize}
    \item We introduce the Exponential Smoothing Multi-Layer Perceptron (ETSMLP) model. We integrate the enhanced ETS module into an element-wise MLP to create an effective sequence model.
    \item We evaluate ETSMLP on the LRA and conduct comparative experiments with transformer encoders on various NLU datasets. The empirical results demonstrate the effective capacity in long-range sequence modeling.
    \item We conduct ablation studies on the proposed parameters and initialization methods. Additionally, we emphasize the advantages of SSMs over the attention mechanism in speed and memory efficiency.
\end{itemize}

\section{Preliminaries}

We introduce basic notations and briefly review SSMs and ETS in this section. Focusing on time-invariant sequence models, we aim to transform a sequence of inputs $X=\{x_1,\dots,x_n\}\in \mathbb{R}^{n\times d}$ into a corresponding sequence of outputs $Y=\{y_1,\dots,y_n\}\in \mathbb{R}^{n\times d}$ with each output $y_i$ is exclusively based on historical data $x_1,\dots,x_i$. 

\begin{figure}[t]
	\centering	
	\includegraphics[width=.4\linewidth]{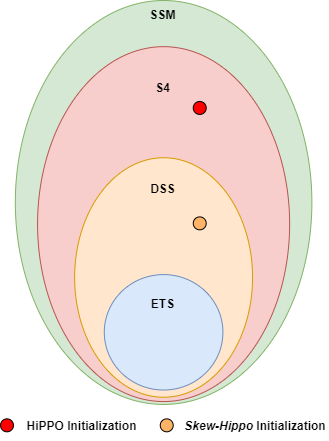}
	\caption{The relations among SSM, S4, DSS, and ETS. The HiPPO initialization is pointed out in red while the \textit{Skew-Hippo} initialization is pointed out in orange.}
	\label{Fig:ETS}	
\end{figure}

\subsection{State space models}

The continuous-time SSM is characterized by the differential equation (\ref{eq:diff}), which establishes the relationship between a continuous-time scalar input $x(t)$ to a scalar output $y(t)$ with the state matrix $A \in \mathbb{R}^{d \times d}$ and vectors $B\in \mathbb{R}^{d\times 1}$, $C\in \mathbb{R}^{1\times N}$:

\begin{equation}
\frac{dh}{dt}(t)=Ah(x)+Bx(t)\quad,\quad y(t)=Ch(t).
\label{eq:diff}
\end{equation}

\noindent If we set a sample time interval $\Delta >0$, and assume that the duration of sampling remains constant $\Delta$, we convert the continuous-time SSM into a discrete-time one using a recursive equation in the following:

\begin{equation}
h_k=\bar{A}h_{k-1}+\bar{B}x_k\quad,\quad y_k=\bar{C}h_k,
\end{equation}

\noindent where $\bar{A}=e^{A\Delta}$,$\bar{B}=(\bar{A}-I)A^{-1}B$ and $\bar{C}=C$. With $x_0=0$, we unroll this recursion explicitly as the equation (\ref{eq:unro}) which can be vectorized into a convolution in the equation (\ref{eq:conv}) with the SSM convolution kernel defined in the equation (\ref{eq:kernal}) as follows:

\begin{gather}
y_k=\sum_{j=0}^{k}\bar{C}\bar{A}^j\bar{B} x_{k-j}, \label{eq:unro}\\
\bar{K}\in \mathbb{R}^L=(\bar{C}\bar{B},\dots ,\bar{C}\bar{A}^{L-1}\bar{B}), \label{eq:kernal}\\
y=\bar{K}*x. \label{eq:conv}
\end{gather}

\noindent If we obtain the kernel $\bar{K}$, the convolution function aforementioned can be efficiently computed with Fast Fourier Transform (FFT) in $O(L\log (L))$ \cite{cormen2022introduction}. Nevertheless, the main challenge in the computation of SSMs is how to efficiently compute $\bar{K}$ from the matrices $\bar{A},\bar{A}$ and $\bar{C}$. S4 proposes an effective parameterization through decomposing matrix $\bar{A}$ to the NPLR matrices \cite{gu2021efficiently}, and diagonal state spaces (DSS) only consider the circumstances when $A$ is diagonalizable over $\mathbb{C}$ \cite{gupta2022diagonal}. Both methods involve intricate mathematics, sophisticated parameterization, and strict initialization, all of which are indispensable for achieving excellent performance. Our method will start from a special SSM, namely ETS, which gives a new insight into  
this problem, and requires fewer mathematical operations, fewer parameters, and more flexible initialization.

\subsection{Exponential smoothing}

ETS is a time series forecasting method that utilizes a weighted average of past observations to predict future values \cite{winters1960forecasting,hunter1986exponentially,hyndman2008forecasting}. The fundamental idea behind ETS is to give more weight to recent observations and less to older ones, with the weights decreasing exponentially as the observations get older. The core recursive equation for this method is the equation (\ref{eq:EST}) with the smoothing factor $\lambda$ in the range $(0,1)$:

\begin{equation}
y_t=\lambda x_t+(1-\lambda)y_{t-1}.
\label{eq:EST}
\end{equation}

ETS is a special SSM, with the substitution $\bar{A}=1-\lambda,\bar{B}=\lambda,\bar{C}=1$. However, compared with SSMs, 
ETS cannot capture sequential information effectively. Figure \ref{Fig:ETS} illustrates the relationship among 
SSMs, S4, DSS, and ETS. S4 and DSS are derived from the continuous-time SSMs with the difference that S4 decomposes the matrix $A$ into an NPLR matrix, while DSS assumes $A$ to be diagonalizable. As a result, the HiPPO initialization in S4 cannot directly adapt to DSS \cite{gu2020hippo}. The initialization in DSS is \textit{skew-Hippo} initialization which is the normal part of the HiPPO matrix. 

ETS serves as a special case within the realm of discrete-time SSM. In our approach, we incorporate parameters directly from ETS, distinguishing ours from S4 and DSS methods that simplify equations based on continuous-time SSMs.

\section{Exponential Smoothing Multi-layer Perceptrons}

In this section, we present our ETSMLP. We first introduce a complex exponential smoothing module which is the pivotal component of our architecture. We then describe the full architecture with two proposed versions, ESMLP and ESMLP-Gate.

\subsection{Complex exponential smoothing module}

\textbf{Learnable damped factors}. Damped factors are a commonly used technique of ETS for attenuating the influence of specific factors \cite{gardner1985exponential,mckenzie2010damped}. We introduce two learnable damped factors $\alpha$ and $\beta$ into simple ETS in equation (\ref{eq:ets}). The factor $\alpha$ controls the learning of $\lambda$ in an exponential scalar. A small $\alpha$ close to zero amplifies the impact of $\lambda$ and results in $\lambda^\alpha$ approximating 1 while a large $\alpha$ diminishes its impact, driving the combination closer to 0. The factor $\beta$ serves as a multiplicative factor that controls the influence of the current input $x_t$. The recursion can be unrolled in equation (\ref{eq:etsunrol}) with the kernel defined by the equation (\ref{eq:etsk}) as follows:

\begin{gather}
y_t=(1-\lambda^\alpha)\beta x_t+(\lambda^\alpha) y_{t-1}, \label{eq:ets}\\
y_t=\sum_{i=0}^{t}(\lambda^\alpha)^i(1-\lambda^\alpha)\beta x_{t-i}, \label{eq:etsunrol}\\
K=((1-\lambda^\alpha)\beta,\dots,(\lambda^\alpha)^{L-1}(1-\lambda^\alpha)\beta ). \label{eq:etsk}
\end{gather}

\noindent \textbf{Complex parameters}. Complex parameters in ETS have been demonstrated to capture both level and trend information in forecasting \cite{svetunkov2022complex}. By extending the learning capacity and enlarging the parameter space, the transformation from real to complex numbers is beneficial. Therefore, we treat $\alpha$, $\lambda$, $\beta$ as complex numbers, and keep the input $x_t$ and the output $y_t$ real. Consequently, only the real part of the kernel coefficients is utilized, and the corresponding computation formula is as follows:

\begin{equation}
y_t=\sum_{i=0}^{t}\Re((\lambda^\alpha)^i(1-\lambda^\alpha)\beta) x_{t-i}.
\label{eq:cest}
\end{equation}

\begin{figure*}[t]
	\centering	
	\begin{minipage}[t]{0.5\linewidth}
		\centering
		\includegraphics[width=\linewidth]{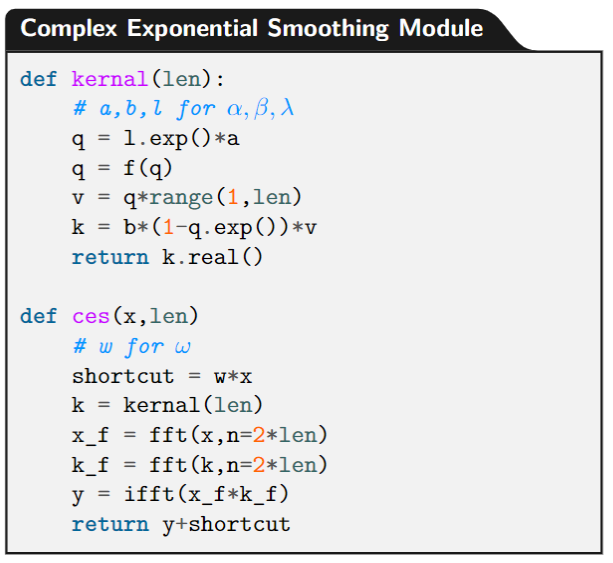}
	\end{minipage}
	\begin{minipage}[t]{0.35\linewidth}
		\centering
		\includegraphics[width=\linewidth]{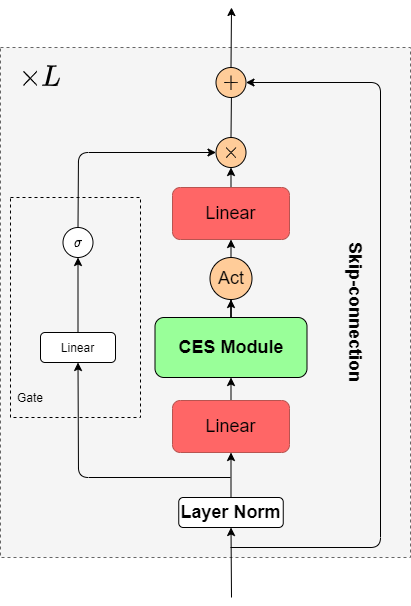}
	\end{minipage}
	\caption{Overview of the ETSMLP architecture. The left is the pseudo-code of the complex exponential smoothing (CES) module. The right is the entire architecture with a gate mechanism. } 
	\label{Fig:Architecture}	
\end{figure*}

\noindent \textbf{Exponential parameterization}. Directly training the parameters $\lambda$ is infeasible, due to the rapid explosion of the gradient $\lambda$ within a few steps. This challenge becomes evident upon inspecting the equation (\ref{eq:EST}). Given the gradients of $y_t$ $\frac{\mathrm{d} L}{\mathrm{d} y_t}$, the gradient of $\lambda$ $\frac{\mathrm{d} L}{\mathrm{d} \lambda}$ could be derived from the formulas (\ref{eq:explode}). This reveals that the gradients of $\lambda$ are proportional to  $\frac1{1-\lambda}$. Consequently, as $\lambda$ approaches 1, the gradients of $\lambda$ will explode.

{\small
\begin{gather}
	\frac{\mathrm{d} L}{\mathrm{d} \lambda} =\sum_{t=1}^{N}  \frac{\mathrm{d} L}{d y_t} \sum_{j=1}^{t}[\frac{(t-j)}{\lambda}-(t-j+1)]\lambda^{t-j}x_j \nonumber\\
	\approx \sum_{t=1}^{N}  \frac{\mathrm{d} L}{\mathrm{d} y_t} \sum_{j=1}^{t}(-1*\lambda^{t-j}x_j)=-\frac{\sum_{t=1}^{N} (\frac{\mathrm{d} L}{\mathrm{d} y_t} y_t)}{1-\lambda} \label{eq:explode}
\end{gather}}

To address this issue, we propose an exponential transformation of the parameters. We trains the parameter $\lambda'=\log\log \lambda$ instead of the $\lambda$. We prove that the stability of learning $\lambda'$ constrains the gradients of $\lambda'$ within a specified range, as described in Proposition 1.

\begin{proposition}
Let $\lambda \in \mathbb{C}$  be within the interior of the hollow unit disc $D^\circ(0,1)=\{z||z|<1\}/\{(0,0)\}$. We define $\lambda'=\log\log \lambda$ which substitutes $\lambda$ in the equation (\ref{eq:cest}). If the gradients of $y_t$ satisfy $\sum_{0}^{L} \frac{\mathrm{d} L}{\mathrm{d} y_t} y_t < \infty$, the gradients of the real and imaginary parts of $\lambda'$ are bounded for all $\lambda \in D^\circ(0,1)$.
\end{proposition}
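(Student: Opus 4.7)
The strategy is to apply the chain rule through the reparameterization $\lambda = \exp(\exp(\lambda'))$ and show that the resulting Jacobian $\lambda\log\lambda$ tames the $1/(1-\lambda)$ singularity already exhibited in equation~(\ref{eq:explode}).

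First I would differentiate $\lambda = \exp(\exp(\lambda'))$: a double application of the chain rule yields $\frac{d\lambda}{d\lambda'} = \exp(\exp(\lambda'))\exp(\lambda') = \lambda\log\lambda$. Composing this with the approximation from equation~(\ref{eq:explode}) gives
\begin{equation*}
\frac{dL}{d\lambda'} \;=\; \frac{dL}{d\lambda}\cdot\frac{d\lambda}{d\lambda'} \;\approx\; -\frac{\lambda\log\lambda}{1-\lambda}\sum_{t=1}^{N}\frac{dL}{dy_t}\,y_t .
\end{equation*}
Because the sum on the right is finite by hypothesis, and because the partial derivatives with respect to $\Re\lambda'$ and $\Im\lambda'$ are each controlled in absolute value by $\bigl|\frac{dL}{d\lambda'}\bigr|$, the proposition reduces to showing that $g(\lambda):=\frac{\lambda\log\lambda}{1-\lambda}$ is uniformly bounded on the punctured open disc $D^\circ(0,1)$.

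The bulk of the proof is then a case analysis of $|g|$ at the three candidate blow-up regions on the closed unit disc. As $\lambda\to 0$, the linear prefactor dominates the logarithmic growth of $\log\lambda$, so $|\lambda\log\lambda|\to 0$ while $|1-\lambda|\to 1$. As $\lambda\to 1$, the Taylor expansion $\log\lambda = -(1-\lambda)+O((1-\lambda)^2)$ exposes the cancellation and gives the finite limit $g(\lambda)\to -1$. For $\lambda = re^{i\theta}$ with $r\to 1^-$ and fixed $\theta\in(-\pi,\pi]\setminus\{0\}$, the identity $|1-e^{i\theta}|=2|\sin(\theta/2)|$ yields $|g(\lambda)|\to |\theta|/\bigl(2|\sin(\theta/2)|\bigr)\le \pi/2$. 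Since $|g|$ extends continuously to the compact closure $\overline{D}(0,1)$ (the branch cut $(-1,0)$ makes $g$ jump by complex conjugation but preserves its modulus), compactness delivers a uniform bound $M:=\sup_{\lambda\in D^\circ(0,1)}|g(\lambda)|<\infty$, completing the proof.

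The main obstacle I anticipate is the $\lambda\to 1$ limit, where $\lambda\log\lambda$ and $1-\lambda$ vanish jointly and the finite limit $-1$ must be extracted via the first-order expansion of $\log$ around $1$; the other two boundary limits reduce to routine polar-coordinate estimates, and the branch-cut ambiguity of $\log$ is sidestepped by working with $|g|$ rather than $g$ itself.
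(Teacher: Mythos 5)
Your overall strategy --- push the chain rule through $\lambda=\exp(\exp(\lambda'))$ to pick up the Jacobian $\lambda\log\lambda$, then show that the quotient $\frac{\lambda\log\lambda}{1-\lambda}$ stays bounded on the closed punctured disc by checking the three boundary regimes $\lambda\to 0$, $\lambda\to 1$, and $|\lambda|\to 1$ with $\arg\lambda\neq 0$ --- is the same analytic heart as the paper's argument, which likewise reduces everything to the finiteness of $\frac{\alpha\log\lambda}{1-\lambda^{\alpha}}$ on the unit circle together with the limit $-1$ at $\lambda=1$ and the vanishing of $\lambda^{\alpha j}\log\lambda$ at the origin. Your handling of the Wirtinger/Cauchy--Riemann step (controlling the partials in $\Re\lambda'$ and $\Im\lambda'$ by $\lvert dL/d\lambda'\rvert$) also matches the paper's identity $\frac{dy'_t}{d\lambda'}=\frac{dy_t}{da}-\frac{dy_t}{db}i$.

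However, there is a genuine gap in how you arrive at the expression to be bounded. You import the formula $\frac{dL}{d\lambda}\approx-\frac{1}{1-\lambda}\sum_t\frac{dL}{dy_t}y_t$ from equation (\ref{eq:explode}) and treat it as if it were exact. That formula is (i) only a heuristic approximation --- it replaces the true coefficient $\frac{t-j}{\lambda}-(t-j+1)$ by $-1$ --- and (ii) derived for the plain real ETS of equation (\ref{eq:EST}), not for the kernel of equation (\ref{eq:cest}) that the proposition actually concerns, which involves complex $\alpha$, $\beta$, powers $\lambda^{\alpha j}$, and a final real-part extraction. Boundedness of an approximate expression does not give boundedness of the true gradient, so the reduction to ``show $g(\lambda)=\frac{\lambda\log\lambda}{1-\lambda}$ is bounded'' is not justified as stated. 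The paper closes exactly this gap in two steps you omit: it introduces the holomorphic surrogate $y'_t$ with $y_t=\Re(y'_t)$ and first proves that the hypothesis $\sum_t\frac{dL}{dy_t}\Re(y'_t)<\infty$ also controls $\sum_t\frac{dL}{dy_t}\Im(y'_t)$ (needed because the exact complex gradient mixes both parts); and it bounds the exact coefficient $j-(j+1)\lambda^{\alpha}$ by a constant $C(\lambda_0)$ with $\lvert C(\lambda_0)\rvert\le 2L$, which lets the sum collapse back to $\frac{y'_t}{1-\lambda^{\alpha}}\,\alpha\log\lambda$ and only then invokes the boundedness of the $\frac{\log\lambda}{1-\lambda^{\alpha}}$ factor. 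To repair your proof you would need to redo the differentiation directly on equation (\ref{eq:cest}) (or on $y'_t$) and supply these two intermediate bounds; your boundary case analysis of $g$ would then apply essentially verbatim, modulo replacing $1-\lambda$ by $1-\lambda^{\alpha}$.
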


\noindent The proof of Proposition 1 is elementary and is provided in the appendix \ref{app:proof}. This proposition proves that the exponential parameterization $\lambda'$ gradients are stable in the feasible region of $\lambda$. 

\noindent \textbf{Constraint function and shortcut}. In addition to the settings as aforementioned, $\lambda^\alpha$ must lie within the feasible field $D^\circ(0,1)$. To address this, we introduce a constraint function to enforce the validity of these parameters, which can be formulated in equation (\ref{eq:cons}): 

\begin{equation}
f(\lambda)=\begin{cases}
\lambda,  & \mbox{if }|\lambda|<max_\lambda ; \\
\frac{max_\lambda}{|\lambda|}\lambda, & \mbox{if }|\lambda|\ge max_\lambda .
\end{cases}
\label{eq:cons}
\end{equation}

Although this solution is simple, it yields remarkably effective results. We also explored an alternative approach inspired by the separation of real and imaginary parts, as discussed in  \cite{gu2022parameterization,orvieto2023resurrecting}. Unfortunately, its performance is unsatisfactory, because the gradients of imaginary parts appear unstable and may explode in a few steps.

Moreover, we introduce a parameter $\omega$ to establish a shortcut from input to output, a commonly used technique in deep learning. This parameter serves as a gating unit that regulates the incoming input. The final output of our model can be described with the sigmoid function $\sigma$ as follows:

\begin{equation}
o=\sigma(\omega)x+y.
\end{equation}

\noindent \textbf{Bidirectional}. We describe a bidirectional model incorporating backward recursion as $y_t=(1-\lambda_2) x_t+\lambda_2 y_{t+1}$. By employing a bidirectional model, the influence of tokens is determined by both preceding and succeeding tokens, resulting in a wide-ranging receptive field. The kernel function is formed by combining the forward and backward kernels in equation (\ref{eq:bikernal}). We employ the circular convolution to compute the output with the input being zero-padded on the right side to twice its length.

\begin{align}
K&=(1-\lambda_1,\dots,(1-\lambda_1)\lambda_1^{L-1}, \nonumber \\
&(1-\lambda_2)\lambda_2^{L-1},\dots,1-\lambda_2).
\label{eq:bikernal}
\end{align}

A sketch of the full computation is presented on the left of Figure \ref{Fig:Architecture}. Initially, we calculate the kernel corresponding to the sequence length and subsequently apply FFT to compute the convolution of the inputs and the kernel.  The Complex Exponential Smoothing (CES) module produces the final results by combining the shortcut and the convolution outputs. Although the current code is designed for a unidirectional kernel, a bidirectional kernel can be easily achieved by connecting two unidirectional kernels using the equations as aforementioned.

\begin{table*}
	\centering
	{
		\begin{tabular}{|c|c|c|c|c|c|c|c|}
			\toprule
			models &\textit{ ListOps}& \textit{Text}& \textit{Retrieval} & \textit{Image} & \textit{Pathfinder} & \textit{Path-X} & Avg.\\
			\midrule
			Transformer &36.37&64.27&57.46&42.44&71.40&-&53.66\\
			Reformer& 37.27&56.10&53.40&38.07&68.50&-&50.36\\
			Linear Trans&16.13&65.90&53.09&42.34&75.30&-&50.46\\
			Performer& 18.01&65.40&53.82&42.77&77.05&-&51.18\\
			\midrule
			S4&58.35&76.02&87.09&87.26&86.05&88.10&80.48\\
			$DSS_{SOFTMAX}$&60.6&84.8&87.8&85.7&84.6&87.8&81.88\\
			$DSS_{EXP}$&59.7&84.6&87.6&84.9&84.7&85.6&81.18\\
			S5&62.15&89.02&\underline{91.4}&\underline{88.0}&\underline{95.33}&\underline{98.58}&\underline{87.46}\\
			Mega-chunk&58.76&\underline{90.19}&90.97&85.80&94.41&93.81&85.66\\
			\midrule
			ETSMLP&61.35&87.2&85.78&78.14&85.86&87.21&80.92\\
			ETSMLP-Gate&\underline{\textbf{62.55}}&\textbf{88.49}&86.72&75.34&\textbf{91.66}&\textbf{93.78}&83.09\\
			\bottomrule
	\end{tabular}}
	\caption{Performance on the LRA benchmark tasks. We follow the procedure reported in \cite{ma2022mega}, and report means across three seeds for our methods. The Bold scores indicate the best performance between S4, DSS, and our models. We also include and underline the state-of-art results of concurrent methods such as Mega and S5.}
	\label{tab:LRA}
\end{table*}	

\subsection{ETSMLP blocks}

We incorporate the CES module into the element-wise MLP to learn token-level information. The CES module facilitates the mix of input information at the token level, resulting in a mixed output containing sequence information. We integrated the CES module just before the activation function into the MLP in the full architecture, depicted in Figure \ref{Fig:Architecture}. The functions are described as follows:

\begin{align*}
	X=\operatorname{LayerNorm}(X_i)&\in \mathbb{R}^{L\times d},\\
	H=W_1X& \in \mathbb{R}^{L\times D},\\
	Y=\sigma(\operatorname{CES}(H))&\in \mathbb{R}^{L\times D},\\
	Z=W_2Y& \in \mathbb{R}^{L\times d}.
\end{align*}

Compared to standard MLP, the increased parameters constitute only $\frac{3}{d}$ of the original MLP, where $d$ is the embedding dimension. For a typical model with $d=512$, a modest increase 0.58\% parameters enables channel-only MLP to learn sequence information, which is previously unattainable. Moreover, the computational and memory complexity is lower than that of the self-attention, as detailed in Section \ref{sec:eff}

\noindent \textbf{Gated architecture}. To further enhance the expressive capacity of our model, we add a gate mechanism like \cite{cho-etal-2014-properties, shazeer2020glu, hua2022transformer}. This gate unit controls the output of each block. After obtaining the representation after layernorm, we pass it through a linear layer, derive the score using the sigmoid function, multiply it with the output from the preceding module, and obtain the output of our layer through a residual connection.  As in Figure \ref{Fig:Architecture}, we express these processes as follows:

\begin{align*}
G=\operatorname{sigmoid}(W_gX)&\in \mathbb{R}^{L\times d},\\
O=G\otimes Z&\in \mathbb{R}^{L\times d},\\
X_{i+1}=X_{i}+O&\in \mathbb{R}^{L\times d}.\\
\end{align*}

\section{Experiments}

We present an empirical comparison between our ETSMLP and other baseline models. Our experiments encompass a set of sequence modeling tasks, including LRA, MNLI, IMDB, etc. The main experiment results are divided into two subsections: LRA and NLU benchmarks. Furthermore, we conduct an ablation study to examine the influence of hyperparameters. Additional information about the experimental details and datasets can be found in Appendix \ref{app:datasets}.

\begin{table*}[t]
	\centering
	{
		\begin{tabular}{c|c|c|c|c|c|c|c}
			\toprule
			\multirow{2}{*}{models}&\multicolumn{3}{c|}{Classification}& \multicolumn{2}{c|}{Similarity}& \multicolumn{2}{c}{Inference}.\\
			&CoLA&SST-2&IMDB&QQP&MRPC&MNLI&QNLI\\
			\midrule
			transformer&69.2&\textbf{81.7}&\textbf{88.2}&80.6&71.1&58.7&61.2\\
			ESMLP&69.2&81.3&87.1&81.6&\textbf{71.6}&60.6&64.5\\
			ESMLP-Gate&\textbf{69.3}&81.2&87.1&\textbf{82.3}&70.3&\textbf{61.3}&\textbf{64.8}\\
			\bottomrule
	\end{tabular}}
	\caption{Performance on the several NLU tasks. We report accuracy scores averaged across three seeds for all the datasets. All models are trained from scratch and are of a fairly similar size. The bold scores indicate the highest performance of each dataset.}
	\label{tab:nlu}
\end{table*}

\subsection{LRA} 
\label{exp:LRA}

The LRA benchmarks are a collection of long sequence modeling tasks ranging from 1024 to over 16000 \cite{tay2020long}. In Table \ref{tab:LRA}, we compare our models to several variants of SSM 
and Transformer. We observe that our model outperforms all the Transformer variants and achieves the comparable performance of S4 on average which is 83.09 vs 80.48. Although we don't gain the highest average scores among all concurrent works, it still produces comparable results without relying on the attention in MEGA \cite{ma2022mega}, or Hippo initialization in S5 \cite{smith2022simplified}. When comparing the individual tasks horizontally, we observe that our model performs significantly better in text tasks such as \textit{ListOps} and \textit{Text}, while slightly underperforming on image tasks like \textit{Image}.  This discrepancy may be attributed to the weight decaying exponentially with distance, which is unsuitable for flattened patches.

We provide the hyperparameters used in our experiments in Appendix \ref{app:datasets}. 

\subsection{NLU}
\label{exp:NLU}

The LRA results demonstrate the benefits of our method in sequential text tasks. Furthermore, we conduct experiments on various NLU tasks and compare our models with a transformer encoder architecture trained from scratch. Our experimental evaluations were divided into three categories: sentence classification, including \textit{CoLA} \cite{warstadt-etal-2019-neural}, \textit{SST-2} \cite{socher-etal-2013-recursive}, and \textit{IMDB} \cite{maas-etal-2011-learning}; sentence similarity, including \textit{QQP}, \footnote{\url{https://quoradata.quora.com/First-Quora-Dataset-Release-Question-Pairs}} and \textit{MRPC} \cite{dolan-brockett-2005-automatically}; and natural language inference, including \textit{MNLI} \cite{williams-etal-2018-broad} and \textit{QNLI} \cite{rocktaschel2015reasoning}. We present the experiment results in Table \ref{tab:nlu}, which reveal that our architecture can achieve comparable or even superior performance to transformers on all the datasets. Considering the simple computation and slight increase in parameters on MLP, these results suggest that the ETS has tremendous potential in sequence learning tasks.

\subsection{Analysis}

\subsubsection{Role of damped factors and fields} 

The experimental results presented above are encouraging and demonstrate the effectiveness of the ETS for sequence modeling tasks. It is proved empirically that even the simplest SSM like ETS can achieve a competitive result compared with other state space model variants. To further consider if we would simplify the ETS in fewer parameters, we conducted ablation studies on the damped factors and the number fields. Table 3 shows the accuracy results of \textit{Listops} if we remove $\alpha$, $\beta$, or $\omega$ or change all parameters from complex to real fields. We could observe that whether to remove the $\alpha$ or $\beta$ or $\omega$ or the complex field, the performance of our method drops significantly, especially $\alpha,\omega$ and complex field. These experiments illustrate the necessity of our architecture in sequence modeling tasks.

\begin{table}[t]
	\centering
	\begin{tabular}{|c|c|c|}
		\toprule
		model & Arch & Acc \\
		\midrule
		\multirow{5}{*}{ETSMLP}&Real&40.5 \\
		&No $\alpha$&41.5 \\
		&No $\omega$&40.75\\
		&No $\beta$ &56.05 \\
		& - & 61.35\\
		\bottomrule
	\end{tabular}
	\caption{Ablation analysis of the learnable parameters $\alpha,\beta,\omega$ on \textit{ListOps}. "-" means the keeping all the parameters of our methods.}
	\label{tab:ab_df}
\end{table}	

\subsubsection{Initialization of parameters} 

S4 and its variants conducted several experiments on $HiPPO$ initialization and concluded that random initialization may hurt performance \cite{gu2021efficiently,gu2022parameterization}. Because of the different computation processes,  $HiPPO$ initialization doesn't work in our models. Therefore, we consider the ring initialization method, which involves uniform sampling on a ring defined by the range $\{r_{min}\le|z|\le r_{max}| z\in \mathbb{C }\}$. By predefining values for $r_{min}$ and $r_{max}$, we uniformly sample $\lambda$ along the ring, between circles with radii $r_{min}$ and $r_{max}$. In addition to examining the effects of different initializations, we conducted experiments using fixed-value initialization operations. Our experimental results on \textit{listops} are displayed in Figure \ref{Fig:init}. It can be observed that our model exhibits consistent performance across rings of varying sizes. However, when dealing with fixed points, the effectiveness diminishes significantly.

\begin{table}[t]
	\centering
	\begin{tabular}{|c|c|c|}
		\toprule
		model & Initialization & Acc \\
		\midrule
		\multirow{6}{*}{ETSMLP}&Stable(0.3,0)&52.4\\
		&Stable(0.7,0)&52.4\\
		&Stable(0.5,0)&52.95 \\
		&Ring(0.1,0.6)&60.55\\
		&Ring(0.6,0.9)&58.95 \\
		&Ring(0.1,0.9)&61.35 \\
		\bottomrule
	\end{tabular}
	\caption{ Performance of  \textit{ListOps} with different initialization. "Ring" means uniform sampling on a ring $\{r_{min}\le|z|\le r_{max}| z\in \mathbb{C }\}$. "Stable" means initialization $\lambda'$ on the same point.}
	\label{Fig:init}
\end{table}	

\begin{figure*}
	\centering	
	\begin{minipage}[t]{0.45\linewidth}
		\centering
		\includegraphics[width=.9\linewidth]{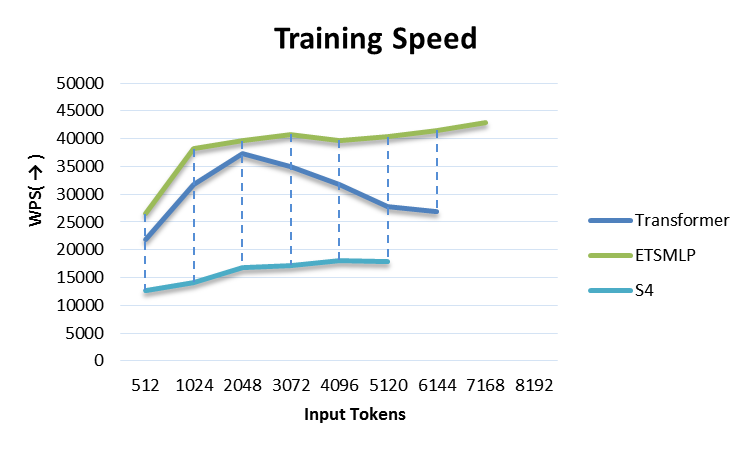}
	\end{minipage}
	\begin{minipage}[t]{0.45\linewidth}
		\centering
		\includegraphics[width=.9\linewidth]{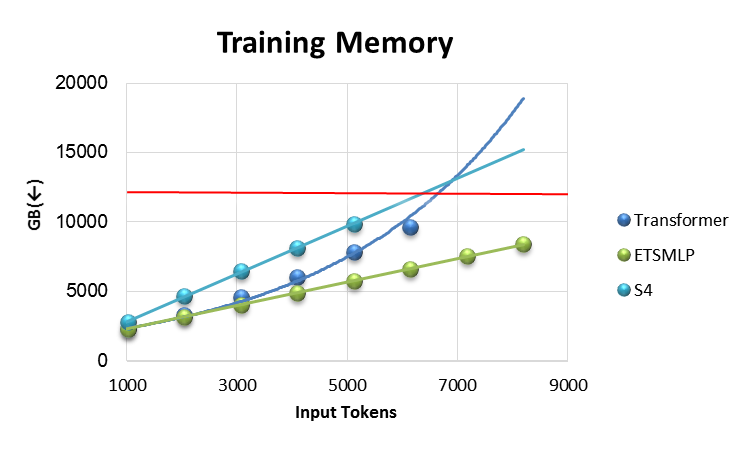}
	\end{minipage}
	\caption{A training speed and memory comparison between the transformer and ETSMLP. Both models have approximately 30M parameters, and the batch size remains constant at 1 under all circumstances.} 
	\label{Fig:Effi}	
\end{figure*}

\subsubsection{Efficiency and memory analysis}  
\label{sec:eff}

To assess the speed and memory efficiency across different lengths, we performed experiments using a synthetic dataset that combines multiple sentences to achieve sufficient length. Our chosen task is language modeling, as it allows us to segment sentences into desired lengths. The maximum length of our synthetic dataset is 8192. We adjusted the sample length within each batch to compare the words per second (WPS) and memory usage (in GB) between the transformer, S4, and our model at comparable sizes. The batch size was uniformly set to 1 to ensure accurate memory usage. All training procedures are carried out on an NVIDIA GeForce GTX 2080 GPU.

The comparison results are presented in Figure \ref{Fig:Effi}. Notice that our approach consistently achieves the highest WPS for all the sequence lengths. The slower performance of S4 can be attributed to its complex calculations on the NPLR. Both our model and S4 share a common characteristic: the WPS remains constant as the sequence length increases, while the transformer shows a decrease. Furthermore, the memory requirements of the transformer exhibit an almost quadratic growth, whereas our model and S4 demonstrate linear growth, with our model having a lower slope. For sequence lengths below 3072, there is minimal difference between our model and the transformer. However, as the training length increases, the undesirable quadratic growth in computation and memory complexity becomes apparent in the transformer, whereas our method avoids this issue.

\section{Related Works}

Since the Transformer was introduced, the quadratic time cost of the attention operation has been numerously researched. Optimizing this operation can improve the efficiency when training and inferencing long context for large language models \cite{xiao2023efficient}. Recently, many transformer variants have been introduced to reduce the complexity of attentions \cite{10.1145/3530811}, including sparse attention \cite{beltagy2020longformer,kitaev2020reformer,guo2021longt5}, kernel-based methods \cite{choromanski2020rethinking,kasai2021finetuning,peng2021random}, chunked attention with gating \cite{hua2022transformer,ma2022mega} and other efficient methods \cite{wang2020linformer,dao2022flashattention}. Another line of research tries to replace the attention mechanism with other modules for long sequences and avoid quadratic time costs. A dizzying number of attention-free models have emerged, where SSMs 
are becoming one of the most promising models among them.

\noindent \textbf{SSMs}. S4 first investigated the SSM for long sequence modeling \cite{gu2021efficiently}. They showed that naive instantiations of the SSM did not perform well but HiPPO-LegS matrix did \cite{gu2020hippo}, and hence introduced the DPLR that efficiently computed the complex diagonal plus low-rank matrix. DSS observed that a fully diagonal matrix could preserve the performance of the original S4 \cite{gupta2022diagonal}, and S4D \cite{gu2022parameterization} then showed that the initialization is critical for DSS. Inspired by S4, many SSM variants emerged recently. S5 replaced single-input, single-output (SISO) SSMs in S4 with multi-input, multi-output (MIMO) \cite{smith2022simplified}. SGConv viewed the SSM as a global convolution model and suggested that the convolution kernel's sub-linear decay in sequence length is indispensable \cite{li2022makes}. Linear Recurrent Unit (LRU) explored the relationship between the SSM and linear RNN and showed the importance of initialization, exponential parameterization, and normalization for SSMs \cite{orvieto2023resurrecting}. MEGA was the most similar approach to ours and plugged exponential moving average into the attention mechanism to improve position-wise local dependency \cite{ma2022mega}. Our CES mechanism only considered a position-aware but representation-agnostic dependency which is completely different from the attention mechanism but matches the performance of the transformer.

\noindent \textbf{Other attention free models}. MLP-Mixer \cite{tolstikhin2021mlp}, and its variants proposed to replace the attention with MLPs in computer vision task \cite{touvron2022resmlp,yu2022s2,tatsunami2022raftmlp}. Another MLP-based model gMLP showed the potential of MLPs to model sequence dependency and achieved comparable results in pretraining and downstream NLP tasks \cite{liu2021pay}. The Attention Free Transformer (AFT) replaced the attention mechanism with an element-wise multiplication and avoided the quadratic computation burden of the attention matrix \cite{zhai2021attention}. Recurrent Memory Transformer (RMT) added a special cache token and utilized the recursive components to increase the context length in the transformer \cite{bulatov2022recurrent,bulatov2023scaling}. Receptance Weighted Key Value (RWKV) leveraged token shift for parallel training a simple linear RNN \cite{peng2023rwkv}. Our models do not conflict with those models in spirit. Our CES modules can be integrated into their models to improve their capabilities of sequence learning.

\section{Conclusion}

We proposed the ETSMLP model for long-range sequence modeling. Our approach began with a special SSM, namely ETS, and incorporated additional hyperparameters. Moreover, we proposed an exponential parameterization and a constraint function essential for stable training. The experimental results demonstrated the effectiveness of the ETSMLP in sequence learning. Our proposed module could become a plug-in module on other models to enhance their sequence learning capabilities. We hope our research could provide valuable insights into the application of SSMs and encourage further exploration in this area.

\section{Limitations}

Our approach focuses on evaluating datasets containing fewer than 100,000 samples, where the influence of prior knowledge on performance is substantial. In the next phase, we aim to conduct experiments on pretraining. The considerable disparity between pretraining and training from scratch requires meticulous adjustment of exponential smoothing and ingenious design of the architecture, something like Mamba \cite{gu2023mamba}.

Another limitation of our approach lies in the empirical design of the constraint function. This arises from the potential for lambda to surpass the precision of 32-bit floating point numbers if its training range is not restricted, and leads to NaN results during backpropagation. We believe that a low granularity parameterization can effectively mitigate this concern. Our future work will prioritize establishing a smooth training process on the parameter space.

\section*{Acknowledgements}
This work was supported by the National Natural Science Foundation of China under grant number 62076009.

\bibliography{anthology,custom}

\appendix
\newtheorem*{proposition*}{Proposition}

\begin{table*}
	\centering
	{
		\begin{tabular}{|c|c|c|c|c|c|c|c|}
			\toprule
			Task& $D$ & $L$ & $H$ & $Norm$ &$LR$ &$WD$& $DP$\\
			\midrule
			\textit{ListOps}&160&12&160&layer&0.01&0.01&0.0\\
			\textit{Text}&160&4&160&layer&0.005&0.01&0.1\\
			\textit{Retrieval}&160&6&160&layer&0.005&0.01&0.1\\
			\textit{Image}&160&12&320&batch&0.01&0.01&0.0\\
			\textit{Pathfinder}&128&6&256&batch&0.01&0.01&0.0\\
			\textit{Path-X}&128&6&256&batch&0.05&0.01&0.0\\
			\textit{CoLA}&512&3&512&layer&1e-5&0.1&0.1\\
			\textit{SST-2}&512&12&512&layer&1e-5&0.1&0.1\\
			\textit{iMDB}&512&4&512&layer&1e-5&0.1&0.3\\
			\textit{QQP}&512&6&512&layer&1e-5&0.1&0.1\\
			\textit{MRPC}&512&6&512&layer&1e-5&0.1&0.1\\
			\textit{MNLI}&512&6&512&layer&1e-5&0.1&0.1\\
			\textit{QNLI}&512&6&512&layer&1e-5&0.1&0.1\\
			\bottomrule
	\end{tabular}}
	\caption{The hyperparameters of the ESMLP on LRA and NLU tasks. $D$ is the embedding size, $H$ is the hidden features, and $L$ is the number of layers. $LR$ is the learning rate, $WD$ is weight decay and $DP$ is dropout. $BN$ and $LN$ in the column $Norm$ refer to Batch Normalization and Layer Normalization. For NLU tasks, the small and base models have different model scales.}
	\label{tab:hyper}
\end{table*}

\section{Proof of Proposition 1}
\label{app:proof}

We first restate Proposition 1 for the sake of convenience.
\begin{proposition*}
Let $\lambda \in \mathbb{C}$  be within the interior of the hollow unit disc $D^\circ(0,1)=\{z||z|<1\}/\{(0,0)\}$. We define $\lambda'=\log\log \lambda$ which substitutes $\lambda$ in the equation (\ref{eq:cest}). If the gradients of $y_t$ satisfy $\sum_{0}^{L} \frac{\mathrm{d} L}{\mathrm{d} y_t} y_t < \infty$, the gradients of the real and imaginary parts of $\lambda'$ are bounded for all $\lambda \in D^\circ(0,1)$.
\end{proposition*}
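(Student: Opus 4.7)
The strategy is to propagate the gradient back via the chain rule through the change of variable $\lambda = \exp(\exp(\lambda'))$ and reduce everything to a scalar boundedness statement on the hollow unit disc. First I would compute the Jacobian: inverting $\lambda' = \log\log\lambda$ gives $\lambda = \exp(\exp(\lambda'))$, which is holomorphic in $\lambda'$ with derivative
\begin{equation*}
\frac{d\lambda}{d\lambda'} \;=\; \exp(\exp(\lambda'))\exp(\lambda') \;=\; \lambda\log\lambda.
\end{equation*}
Writing $\lambda' = u + iv$ with $u,v\in\mathbb{R}$ and applying the Wirtinger chain rule for the real-valued loss $L$, one gets $\partial L/\partial \lambda' = (\partial L/\partial \lambda)\,\lambda\log\lambda$, and therefore
\begin{equation*}
\max\!\big(|\partial L/\partial u|,\ |\partial L/\partial v|\big) \;\le\; 2\,|\partial L/\partial \lambda|\,|\lambda\log\lambda|.
\end{equation*}

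Next I would invoke the estimate (\ref{eq:explode}), whose derivation is linear in the powers of $\lambda$ and carries over to complex $\lambda$ unchanged, to conclude $|\partial L/\partial \lambda|\le C/|1-\lambda|$ with $C = |\sum_t (dL/dy_t)\,y_t|$ finite by hypothesis. Combining the two inequalities, the gradients of the real and imaginary parts of $\lambda'$ are each bounded above by $2C\,|h(\lambda)|$, where
\begin{equation*}
h(\lambda) \;:=\; \frac{\lambda\log\lambda}{1-\lambda}.
\end{equation*}
The proposition thus reduces to showing that $|h|$ is uniformly bounded on $D^\circ(0,1)$ for a fixed branch of $\log$.

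To verify this I would fix the principal branch of $\log$ and cover the hollow disc by a punctured neighborhood $U_0$ of $0$, a neighborhood $U_1$ of $1$, and the compact remainder $K$. On $K$ the function $h$ is continuous on a compact set and hence bounded. On $U_0$ one has $|\lambda\log\lambda|\le|\lambda|\sqrt{(\ln|\lambda|)^2+\pi^2}\to 0$ while $|1-\lambda|$ stays close to $1$, so $h\to 0$. On $U_1$, the expansion $\log\lambda = -\sum_{n\ge 1}(1-\lambda)^n/n$ gives $h(\lambda) = -\lambda\sum_{n\ge 0}(1-\lambda)^n/(n+1)$, uniformly bounded by $2$ on $|1-\lambda|\le 1/2$ with removable limit $h(1) = -1$. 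Gluing the three pieces yields the required uniform bound. The main obstacle I expect is the analysis at $\lambda=0$: this point sits on the boundary of the parameter region while $\log\lambda$ blows up there, and one must verify uniformly over the angular direction, including across the branch cut of the principal logarithm, that $\lambda\log\lambda$ decays fast enough to compensate. Keeping track of which branch of $\log\log$ is used so that $\lambda'$ remains a well-defined chart is the only other subtlety; once both are settled, the chain-rule and gluing steps are routine.
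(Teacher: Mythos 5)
There is a genuine gap, and it sits exactly where the real work of the proposition lies. Your reduction hinges on the inequality $|\partial L/\partial \lambda|\le C/|1-\lambda|$ "invoked" from the display (\ref{eq:explode}). But (\ref{eq:explode}) is an approximation, not an identity: the $\approx$ step discards the terms $(t-j)(\lambda^{-1}-1)\lambda^{t-j}x_j$, which are up to $L$ times as large as the retained ones, so no bound with the specific constant $C=|\sum_t (\mathrm{d}L/\mathrm{d}y_t)\,y_t|$ follows. Worse, (\ref{eq:explode}) is derived for the plain real ETS of equation (\ref{eq:EST}), whereas the proposition concerns equation (\ref{eq:cest}), where $y_t=\Re(y'_t)$ is only the real part of the complex quantity $y'_t=\sum_j(\lambda^\alpha)^j(1-\lambda^\alpha)\beta x_{t-j}$. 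The telescoping identity you need, $\sum_j(\cdots)x_{t-j}=y_t/(1-\lambda)$, therefore fails: the sum reconstructs $y'_t/(1-\lambda^\alpha)$, and the hypothesis only controls $\sum_t(\mathrm{d}L/\mathrm{d}y_t)\Re(y'_t)$, saying nothing a priori about $\sum_t(\mathrm{d}L/\mathrm{d}y_t)\Im(y'_t)$. This is precisely where the paper's proof spends its effort: it introduces $y'_t$, computes the exact holomorphic derivative $\mathrm{d}y'_t/\mathrm{d}\lambda'=\sum_j\beta x_{t-j}\,\alpha\lambda^{\alpha j}(j-(j+1)\lambda^\alpha)\log\lambda$, absorbs the $j$-dependent factor into a constant $C(\lambda_0)$ of modulus $O(L)$ (this is the honest replacement for your $C/|1-\lambda|$ bound), and separately argues that the $\Im(y'_t)$ sum is finite whenever the $\Re(y'_t)$ sum is. None of these steps can be skipped by citing (\ref{eq:explode}).

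What you do have right is the analytic core: the Jacobian $\mathrm{d}\lambda/\mathrm{d}\lambda'=\lambda\log\lambda$, the Wirtinger reduction to $|\partial L/\partial\lambda|\,|\lambda\log\lambda|$, and the boundedness of $\lambda\log\lambda/(1-\lambda)$ on the hollow disc via the three-piece covering (near $0$, near $1$, compact remainder). That last argument is in fact more careful than the paper's terse appeal to continuity plus finiteness on the boundary, and it mirrors the paper's key limit $\lim_{\lambda\to 1}\alpha\log\lambda/(1-\lambda^\alpha)=-1$. To repair the proof you would need to (i) differentiate equation (\ref{eq:cest}) itself rather than (\ref{eq:EST}), keeping the exponents $\alpha$ and the coefficient $\beta$, (ii) bound the resulting $j$-dependent factors by an $L$-dependent constant instead of pretending they telescope away, (iii) note that the relevant scalar function is $\alpha\log\lambda/(1-\lambda^\alpha)$ rather than $\lambda\log\lambda/(1-\lambda)$, and (iv) handle the passage from $y'_t$ to its real part, i.e.\ justify why the hypothesis on $\sum_t(\mathrm{d}L/\mathrm{d}y_t)y_t$ also controls the imaginary-part sum.
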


\begin{proof}
Let the real and imaginary parts of $\lambda'$ be $a$ and $b$, thus $\lambda'=a+bi$.

We define intermediate variables $y'_t\in \mathbb{C}$ for $t$ in range $\{0,1,\dots,L\}$ and  
$$y'_t=\sum_{j=0}^{t-1}(\lambda^\alpha)^j(1-\lambda^\alpha)\beta x_{t-j}.$$
Compared with the equation (\ref{eq:cest}), we find that $y_t=\Re(y'_t)$. 

We have that  if $\sum_{0}^{L} \frac{\mathrm{d} L}{\mathrm{d} y_t} y_t < \infty$, then $\sum_{0}^{L} \frac{\mathrm{d} L}{\mathrm{d} y_t} \Im(y'_t) < \infty$. This is proved by the following: 

{\small
\begin{align*}
	\tiny
	\frac{\Re(y'_t)}{\Im(y'_t)}&=\frac{\sum_{j=0}^{t}\Re(\beta (1-\lambda^\alpha)\lambda^{\alpha*j})x_{t-i}}{\sum_{j=0}^{t}\Im(\beta (1-\lambda^\alpha)\lambda^{\alpha*j})x_{t-i}}\\
	&=\frac{\sum_{j=0}^{t}\Re((\beta_1+i\beta_2)((\lambda^\alpha)^j-(\lambda^\alpha)^{j+1}))x_{t-i}}{\sum_{j=0}^{t}\Im((\beta_1+i\beta_2)((\lambda^\alpha)^j-(\lambda^\alpha)^{j+1}))x_{t-i}}\\
	&=\frac{\sum_{j=0}^{t}|\lambda^\alpha|^j(\beta_1(\cos (j\theta)-|\lambda^\alpha|\cos((j+1)\theta))-}{\sum_{j=0}^{t}|\lambda^\alpha|^j(\beta_2(\cos (j\theta)-|\lambda^\alpha|\cos((j+1)\theta))+} \\
        &\quad \frac{\beta2(\sin(j\theta)-|\lambda^\alpha|\sin((j+1)\theta)))x_{t-i}}{\beta2(\sin(j\theta)-|\lambda^\alpha|\sin((j+1)\theta)))x_{t-i}}.
\end{align*}
}

\noindent It is obvious that the ratio $\frac{\Re(y'_t)}{\Im(y'_t)}$ is bounded for the finite summation $\sum_{0}^{L} \frac{\mathrm{d} L}{\mathrm{d} y_t} \Im(y'_t)$ is bounded too.

To compute the gradients of $a$ and $b$, we consider the gradients of $\lambda'$ for $y'_t$. The function of $y'_t$ is holomorphic function for $\lambda'$ thus the gradients is:

{\small
\begin{align*}
\frac{\mathrm{d} y'_t}{\mathrm{d} \lambda'}&=\sum_{j=0}^{t} \beta x_{t-j} (\alpha j \lambda ^{\alpha j -1}-\alpha (j+1) \lambda^{\alpha (j+1)-1}) e^{e^{\lambda'}}e^{\lambda'}\\
&=\sum_{j=0}^{t} \beta x_{t-j} \alpha \lambda ^{\alpha j}( j - (j+1) \lambda^{\alpha}) \log(\lambda)\\
&=\sum_{j=0}^{t} \beta x_{t-j} \alpha \lambda ^{\alpha j}( j - (j+1) \lambda^{\alpha}) \log(\lambda).
\end{align*}
}

\noindent As $y'_t$ is a holomorphic function for $\lambda'$ and $y_t=\Re(y'_t)$, we thus have:
$$\frac{\mathrm{d} y'_t}{\mathrm{d} \lambda'}=\frac{\mathrm{d} y_t}{\mathrm{d} a}-\frac{\mathrm{d} y_t}{\mathrm{d} b}i.$$

\noindent Thus, the gradients of $a$ and $b$ is computed by the chain rule in the following:

{\small
\begin{align*}
\frac{\mathrm{d} L}{\mathrm{d} a}&= \sum_{t=1}^{L} \frac{\mathrm{d} L}{\mathrm{d} y_t} \frac{\mathrm{d} y_t}{\mathrm{d} a}\\
&=\sum_{t=1}^{L} \frac{\mathrm{d} L}{\mathrm{d} y_t} \Re(\sum_{j=0}^{t} \beta x_{t-j} \alpha \lambda ^{\alpha j}( j - (j+1) \lambda^{\alpha}) \log(\lambda));
\end{align*}
\begin{align*}
\frac{\mathrm{d} L}{\mathrm{d} b}&= \sum_{t=1}^{L} \frac{\mathrm{d} L}{\mathrm{d} y_t} \frac{\mathrm{d} y_t}{\mathrm{d} b}\\
&=-\sum_{t=1}^{L} \frac{\mathrm{d} L}{\mathrm{d} y_t}\Im(\sum_{j=0}^{t} \beta x_{t-j} \alpha \lambda ^{\alpha j}( j - (j+1) \lambda^{\alpha}) \log(\lambda)).
\end{align*}
}

\noindent Obviously, $\frac{\mathrm{d} L}{\mathrm{d} a}$ and $\frac{\mathrm{d} L}{\mathrm{d} b}$ are continue on all $\lambda \in D^\circ(0,1)$. 

We consider the boundary of $D^\circ(0,1)$. We first take a look at the zero point. For $\lim_{\lambda\rightarrow \infty} \lambda ^{\alpha j} \log(\lambda) =0 $, we can easily compute the limitation :

\begin{align*}
\tiny
\lim_{\lambda \rightarrow 0}\frac{\mathrm{d} L}{\mathrm{d} a}&=\sum_{t=1}^{L} \frac{\mathrm{d} L}{\mathrm{d} y_t} \Re(\sum_{j=0}^{t} \beta x_{t-j} \alpha j *(\lambda ^{\alpha j} \log(\lambda)))\\
&=0.
\end{align*}

\noindent For the point $\lambda_0$ on the unit cycle, we can find a constant $C(\lambda_0)\in D(0,2L)$ which satisfies  $\Re(\beta \alpha \lambda_0 ^{\alpha j}(j - (j+1) \lambda_0^{\alpha}))\le \Re(\beta \alpha \lambda_0 ^{\alpha j}*C(\lambda_0))$ for all $j\in \{1,2,\dots,L\}$. Therefore,

\begin{align*}
\tiny
\frac{\mathrm{d} L}{\mathrm{d} a}&= \sum_{t=1}^{L} \frac{\mathrm{d} L}{\mathrm{d} y_t} \frac{\mathrm{d} y_t}{\mathrm{d} a}\\
&\le \sum_{t=1}^{L} \frac{\mathrm{d} L}{\mathrm{d} y_t} \Re(\sum_{j=0}^{t} \beta x_{t-j} \alpha \lambda ^{\alpha j} C(\lambda_0) \log(\lambda))\\
&= \sum_{t=1}^{L} \frac{\mathrm{d} L}{\mathrm{d} y_t} \Re( \frac{y'_t}{1-\lambda^\alpha} \alpha C(\lambda_0) \log(\lambda)).
\end{align*}

\noindent We know that $\frac{\log(\lambda)\alpha}{1-\lambda^\alpha}$ is finite on the unit cycle except for $1$ and the $\lim_{\lambda\rightarrow 1}\frac{\log(\lambda)\alpha}{1-\lambda^\alpha}=-1$ is also finite. As a result, we can find a constant $D$ which satisfies $\Re(y'_t \frac{\alpha \log( \lambda_0)}{1-\lambda_0^\alpha}  C(\lambda_0))\le \Re(y'_t) D$ for all $t=\{1,\dots,L\}$ and $\lambda_0$. Thus, for all $\lambda_0$ on the unit cycle we have:

\begin{align*}
\tiny
\frac{\mathrm{d} L}{\mathrm{d} a}&\le \sum_{t=1}^{L} \frac{\mathrm{d} L}{\mathrm{d} y_t} \Re(y'_t) D<\infty .
\end{align*}

\noindent Similarly, we have:

\begin{align*}
\tiny
\frac{\mathrm{d} L}{\mathrm{d} a}&\le \sum_{t=1}^{L} \frac{\mathrm{d} L}{\mathrm{d} y_t} \Im(y'_t) D<\infty .
\end{align*}

\noindent As the function $\frac{\mathrm{d} L}{\mathrm{d} a}$ and $\frac{\mathrm{d} L}{\mathrm{d} b}$ are continues and the boundaries are finite, by the boundedness theorem, we conclude that the gradients of $a$ and $b$ are bounded for all $\lambda \in D^\circ(0,1)$.
\end{proof}

\section{Experimental setup}
\label{app:datasets}

\noindent\textbf{Architecture}. We present an overview of our architecture in Figure \ref{Fig:Architecture}. The ETSMLP architecture contains $L$ blocks and each block contains a normalization, skip connection, and an MLP plus CES. We use the ReLU activation function in MLPs. For ETSMLP-Gate architecture, an extra gate mechanism is added parallel to the original architecture. For the sake of performance, we add extra normalization like LRU \cite{orvieto2023resurrecting}. We use bidirectional models for all datasets.

\noindent \textbf{Experimental details}. We use the Adam optimizer \cite{kingma2017adam}, with the hyperparameter $\beta_1=0.9$, $\beta_2=0.98$.  We use warmup for the learning rate $LR$ that we start from a value of $10^{-7}$ and increase the learning rate linearly up a specified value for the first 10\% of training. Then a linear annealing schedule is conducted for the rest of the training. All experiments except for Path-X were carried out on an NVIDIA GeForce GTX 2080 GPU, while Path-X requires 8 NVIDIA GeForce GTX 2080 GPUs.

\noindent \textbf{Hyperparameters}. We follow the general optimization approach used by Mega \cite{ma2022mega}. Table \ref{tab:hyper} presents the main hyperparameters for each experiment. For all the experiments, we tune the embedding size $D$, the number of layers $L$, and the hidden features $H$. We also tune the learning rate $LR$ and weight decay $WD$ for all the datasets. Besides, the $max_\lambda$ of the constraint function is all set to $0.9999$.

\end{document}